\newtheorem{prop}{Proposition}
\newtheorem{assum}{Assumption}
\newtheorem{lemma}{Lemma}
\newtheorem{cor}{Corollary}
\title{Good Semi-supervised Learning \\That Requires a Bad GAN}
\newcommand*\samethanks[1][\value{footnote}]{\footnotemark[#1]}
\author{
  Zihang Dai\thanks{Equal contribution. Ordering determined by dice rolling.}~,  Zhilin Yang\samethanks~, Fan Yang, William W. Cohen, Ruslan Salakhutdinov \\
 School of Computer Science\\
  Carnegie Melon University\\
  \texttt{dzihang,zhiliny,fanyang1,wcohen,rsalakhu@cs.cmu.edu} \\
}
\begin{document}

\maketitle

\begin{abstract}
Semi-supervised learning methods based on generative adversarial networks (GANs) obtained strong empirical results, but it is not clear 1) how the discriminator benefits from joint training with a generator, and 2) why good semi-supervised classification performance and a good generator cannot be obtained at the same time. Theoretically we show that given the discriminator objective, good semi-supervised learning indeed requires a bad generator, and propose the definition of a preferred generator. Empirically, we derive a novel formulation based on our analysis that substantially improves over feature matching GANs, obtaining state-of-the-art results on multiple benchmark datasets\footnote{Code is available at \url{https://github.com/kimiyoung/ssl_bad_gan}.}.
\end{abstract}

\section{Introduction} \label{sec:intro}

Deep neural networks are usually trained on a large amount of labeled data, and it has been a challenge to apply deep models to datasets with limited labels. Semi-supervised learning (SSL) aims to leverage the large amount of unlabeled data to boost the model performance, particularly focusing on the setting where the amount of available labeled data is limited. Traditional graph-based methods \citep{belkin2006manifold,zhu2003semi} were extended to deep neural networks \citep{weston2012deep,yang2016revisiting,kipf2016semi}, which involves applying convolutional neural networks \citep{lecun1998gradient} and feature learning techniques to graphs so that the underlying manifold structure can be exploited. \citep{rasmus2015semi} employs a Ladder network to minimize the layerwise reconstruction loss in addition to the standard classification loss. Variational auto-encoders have also been used for semi-supervised learning \citep{kingma2014semi,maaloe2016auxiliary} by maximizing the variational lower bound of the unlabeled data log-likelihood.

Recently, generative adversarial networks (GANs) \citep{goodfellow2014generative} were demonstrated to be able to generate visually realistic images. GANs set up an adversarial game between a discriminator and a generator. The goal of the discriminator is to tell whether a sample is drawn from true data or generated by the generator, while the generator is optimized to generate samples that are not distinguishable by the discriminator. Feature matching (FM) GANs \citep{salimans2016improved} apply GANs to semi-supervised learning on $K$-class classification. The objective of the generator is to match the first-order feature statistics between the generator distribution and the true distribution. Instead of binary classification, the discriminator employs a $(K+1)$-class objective, where true samples are classified into the first $K$ classes and generated samples are classified into the $(K+1)$-th class. This $(K+1)$-class discriminator objective leads to strong empirical results, and was later widely used to evaluate the effectiveness of generative models \citep{dumoulin2016adversarially,ulyanov2017adversarial}.

Though empirically feature matching improves semi-supervised classification performance, the following questions still remain open. First, it is not clear why the formulation of the discriminator can improve the performance when combined with a generator. Second, it seems that good semi-supervised learning and a good generator cannot be obtained at the same time. For example,~\citep{salimans2016improved} observed that mini-batch discrimination generates better images than feature matching, but feature matching obtains a much better semi-supervised learning performance. The same phenomenon was also observed in \citep{ulyanov2017adversarial}, where the model generated better images but failed to improve the performance on semi-supervised learning.

In this work, we take a step towards addressing these questions. First, we show that given the current $(K+1)$-class discriminator formulation of GAN-based SSL, good semi-supervised learning requires a ``bad'' generator. Here by \textit{bad} we mean the generator distribution should not match the true data distribution. Then, we give the definition of a preferred generator, which is to generate complement samples in the feature space. Theoretically, under mild assumptions, we show that a properly optimized discriminator obtains correct decision boundaries in high-density areas in the feature space if the generator is a \textit{complement generator}.

Based on our theoretical insights, we analyze why feature matching works on 2-dimensional toy datasets. It turns out that our practical observations align well with our theory. However, we also find that the feature matching objective has several drawbacks. Therefore, we develop a novel formulation of the discriminator and generator objectives to address these drawbacks. In our approach, the generator minimizes the KL divergence between the generator distribution and a target distribution that assigns high densities for data points with low densities in the true distribution, which corresponds to the idea of a complement generator. Furthermore, to enforce our assumptions in the theoretical analysis, we add the conditional entropy term to the discriminator objective.

Empirically, our approach substantially improves over vanilla feature matching GANs, and obtains new state-of-the-art results on MNIST, SVHN, and CIFAR-10 when all methods are compared under the same discriminator architecture. Our results on MNIST and SVHN also represent state-of-the-art amongst all single-model results.


\section{Related Work}
Besides the adversarial feature matching approach~\citep{salimans2016improved}, several previous works have incorporated the idea of adversarial training in semi-supervised learning.
Notably, \citep{springenberg2015unsupervised} proposes categorical generative adversarial networks (CatGAN), which substitutes the binary discriminator in standard GAN with a multi-class classifier, and trains both the generator and the discriminator using information theoretical criteria on unlabeled data. 
From the perspective of regularization, \citep{miyato2015distributional,miyato2017virtual} propose virtual adversarial training (VAT), which effectively smooths the output distribution of the classifier by seeking virtually adversarial samples.
It is worth noting that VAT bears a similar merit to our approach, which is to learn from auxiliary non-realistic samples rather than realistic data samples. Despite the similarity, the principles of VAT and our approach are orthogonal, where VAT aims to enforce a smooth function while we aim to leverage a generator to better detect the low-density boundaries.
Different from aforementioned approaches, \citep{yang2017semi} proposes to train conditional generators with adversarial training to obtain complete sample pairs, which can be directly used as additional training cases.
Recently, Triple GAN~\citep{li2017triple} also employs the idea of conditional generator, but uses adversarial cost to match the two model-defined factorizations of the joint distribution with the one defined by paired data.

Apart from adversarial training, there has been other efforts in semi-supervised learning using deep generative models recently.
As an early work, \citep{kingma2014semi} adapts the original Variational Auto-Encoder (VAE) to a semi-supervised learning setting by treating the classification label as an additional latent variable in the directed generative model.
\citep{maaloe2016auxiliary} adds auxiliary variables to the deep VAE structure to make variational distribution more expressive.
With the boosted model expressiveness, auxiliary deep generative models (ADGM) improve the semi-supervised learning performance upon the semi-supervised VAE.
Different from the explicit usage of deep generative models, the Ladder networks~\citep{rasmus2015semi} take advantage of the local (layerwise) denoising auto-encoding criterion, and create a more informative unsupervised signal through lateral connection.

\section{Theoretical Analysis}
\label{sec:theory}

Given a labeled set $\mathcal{L} = \{(x, y)\}$, let $\{1, 2, \cdots, K\}$ be the label space for classification. Let $D$ and $G$ denote the discriminator and generator, and $P_D$ and $p_G$ denote the corresponding distributions. Consider the discriminator objective function of GAN-based semi-supervised learning \citep{salimans2016improved}: 
\begin{equation}
\max_{D} \mathbb{E}_{x, y \sim \mathcal{L}} \log P_D(y | x, y \leq K) + \mathbb{E}_{x \sim p} \log P_D(y \leq K | x) + \mathbb{E}_{x \sim p_G} \log P_D(K + 1 | x),
\label{eq:obj}
\end{equation}
where $p$ is the true data distribution. The probability distribution $P_D$ is over $K+1$ classes where the first $K$ classes are true classes and the $(K+1)$-th class is the fake class. The objective function consists of three terms. The first term is to maximize the log conditional probability for labeled data, which is the standard cost as in supervised learning setting. The second term is to maximize the log probability of the first $K$ classes for unlabeled data. The third term is to maximize the log probability of the $(K+1)$-th class for generated data. Note that the above objective function bears a similar merit to the original GAN formulation if we treat $P(K+1 | x)$ to be the probability of fake samples, while the only difference is that we split the probability of true samples into $K$ sub-classes.

Let $f(x)$ be a nonlinear vector-valued function, and $w_k$ be the weight vector for class $k$. As a standard setting in previous work \citep{salimans2016improved,dumoulin2016adversarially}, the discriminator $D$ is defined as $P_D(k | x) = \frac{\exp (w_k^\top f(x)) }{\sum_{k'=1}^{K+1} \exp(w_{k'}^\top f(x)) }$.
Since this is a form of over-parameterization, $w_{K + 1}$ is fixed as a zero vector \citep{salimans2016improved}.
We next discuss the choices of different possible $G$'s.

\subsection{Perfect Generator}

Here, by perfect generator we mean that the generator distribution $p_G$ exactly matches the true data distribution $p$, i.e., $p_G = p$. We now show that when the generator is perfect, it does not improve the generalization over the supervised learning setting.


\begin{prop}
If $p_G = p$, and $D$ has infinite capacity, then for any optimal solution $D = (w, f)$ of the following supervised objective,
\begin{equation}
\max_D \mathbb{E}_{x, y \sim \mathcal{L}} \log P_D(y | x, y \leq K),
\label{eq:sup-obj}
\end{equation}
there exists $D^* = (w^*, f^*)$ such that $D^*$ maximizes Eq. (\ref{eq:obj}) and that for all $x$, $P_D(y | x, y \leq K) = P_{D^*}(y | x, y \leq K)$.
\label{prop:perfect}
\end{prop}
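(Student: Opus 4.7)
The plan is to decompose Eq.~(\ref{eq:obj}) when $p_G = p$ into two parts that can be handled independently, and then construct $D^*$ that simultaneously attains the optimum of each part while preserving the conditional distribution over the first $K$ classes.

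First I would substitute $p_G = p$ to combine the last two terms of Eq.~(\ref{eq:obj}) into $\mathbb{E}_{x \sim p}\bigl[\log P_D(y \leq K \mid x) + \log P_D(K+1 \mid x)\bigr]$. Pointwise in $x$, this expression has the form $\log q + \log(1-q)$ with $q = P_D(y \leq K \mid x) \in (0,1)$, whose unique maximum is $-2\log 2$, attained at $q = 1/2$. The first term of Eq.~(\ref{eq:obj}) depends on $(w, f)$ only through the conditional $P_D(y \mid x, y \leq K) = \exp(w_y^\top f(x))/\sum_{k'=1}^{K}\exp(w_{k'}^\top f(x))$, which is exactly what Eq.~(\ref{eq:sup-obj}) optimizes. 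Thus the supremum of Eq.~(\ref{eq:obj}) is upper bounded by the supremum of Eq.~(\ref{eq:sup-obj}) minus $2\log 2$.

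Second, I would construct $D^* = (w^*, f^*)$ attaining this upper bound while leaving the conditional over $\{1,\dots,K\}$ unchanged. Given $(w, f)$, let $\ell(x) = \log \sum_{k'=1}^K \exp(w_{k'}^\top f(x))$, and set $f^*(x) = \bigl(f(x),\, -\ell(x)\bigr)$, $w_k^* = (w_k,\, 1)$ for $k \leq K$, and $w_{K+1}^* = 0$. A short algebraic check gives $w_k^{*\top} f^*(x) = w_k^\top f(x) - \ell(x)$, so $\sum_{k=1}^{K} \exp(w_k^{*\top} f^*(x)) = 1$ and hence $P_{D^*}(y \leq K \mid x) = 1/2$ for every $x$. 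A second computation shows $P_{D^*}(k \mid x, y \leq K) = P_D(k \mid x, y \leq K)$ for all $x$ and all $k \leq K$, which both ensures $D^*$ achieves the supervised optimum on the first term of Eq.~(\ref{eq:obj}) and establishes the conclusion of the proposition. Combined with the pointwise optimum $-2\log 2$ on the other two terms, $D^*$ attains the upper bound derived above and is therefore an optimum of Eq.~(\ref{eq:obj}).

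The main obstacle is being precise about the role of ``infinite capacity'': we need the augmented $f^*$ to lie in the admissible function class, which is immediate once infinite capacity is interpreted as freedom to choose $f$ pointwise. The conceptual crux making the argument go through is that the softmax conditional over $\{1,\dots,K\}$ can be held fixed while $P_D(y \leq K \mid x)$ is tuned independently; the added coordinate in $f^*$, together with the gauge $w_{K+1}^* = 0$, is precisely the device that realizes this decoupling.
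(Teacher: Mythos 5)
Your proof is correct and follows essentially the same route as the paper's: with $p_G = p$ the objective splits into the supervised term plus a pointwise $\log q + \log(1-q)$ term maximized at $q = 1/2$, and you build $D^*$ that preserves the conditional over the first $K$ classes while normalizing the logits so that $P_{D^*}(K+1 \mid x) = 1/2$. The only difference is cosmetic: the paper invokes infinite capacity to assert the existence of $(w^*, f^*)$ with $w_k^{*\top} f^*(x) = w_k^\top f(x) - \log\sum_{k'}\exp(w_{k'}^\top f(x))$, whereas you realize the same reparameterization explicitly by appending one coordinate to $f$, which is a slightly more concrete use of the capacity assumption.
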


The proof is provided in the supplementary material. Proposition \ref{prop:perfect} states that for any optimal solution $D$ of the supervised objective, there exists an optimal solution $D^*$ of the $(K+1)$-class objective such that $D$ and $D^*$ share the same generalization error.
In other words, using the $(K+1)$-class objective does not prevent the model from experiencing any arbitrarily high generalization error that it could suffer from under the supervised objective.
Moreover, since all the optimal solutions are equivalent w.r.t. the $(K+1)$-class objective, it is the optimization algorithm that really decides which specific solution the model will reach, and thus what generalization performance it will achieve.
This implies that when the generator is perfect, the $(K+1)$-class objective by itself is not able to improve the generalization performance.
In fact, in many applications, an almost infinite amount of unlabeled data is available, so learning a perfect generator for purely sampling purposes should not be useful. In this case, our theory suggests that not only the generator does not help, but also unlabeled data is not effectively utilized when the generator is perfect.

\subsection{Complement Generator} \label{sec:comp}

The function $f$ maps data points in the input space to the feature space. Let $p_k(f)$ be the density of the data points of class $k$ in the feature space. Given a threshold $\epsilon_k$, let $F_k$ be a subset of the data support where $p_k(f) > \epsilon_k$, i.e., $F_k = \{f: p_k(f) > \epsilon_k\}$. We assume that given $\{\epsilon_k\}_{k=1}^K$, the $F_k$'s are disjoint with a margin. More formally, for any $f_j \in F_j$, $f_k \in F_k$, and $j \not= k$, we assume that there exists a real number $0 < \alpha < 1$ such that $\alpha f_j + (1 - \alpha) f_k \notin F_j \cup F_k$. 
As long as the probability densities of different classes do not share any mode, i.e., $\forall i \neq j, \mathrm{argmax}_f  p_i(f) \cap  \mathrm{argmax}_f p_j(f) = \emptyset$, this assumption can always be satisfied by tuning the thresholds $\epsilon_k$'s. 
With the assumption held, we will show that the model performance would be better if the thresholds could be set to smaller values (ideally zero). We also assume that each $F_k$ contains at least one labeled data point.

Suppose $\cup_{k=1}^K F_k$ is bounded by a convex set $\mathcal{B}$. If the support $F_G$ of a generator $G$ in the feature space is a relative complement set in $\mathcal{B}$, i.e., $F_G = \mathcal{B} - \cup_{k=1}^K F_k$, we call $G$ a complement generator. The reason why we utilize a bounded $\mathcal{B}$ to define the complement is presented in the supplementary material. Note that the definition of complement generator implies that $G$ is a function of $f$. By treating $G$ as function of $f$, theoretically $D$ can optimize the original objective function in Eq. (\ref{eq:obj}).


Now we present the assumption on the convergence conditions of the discriminator. Let $\mathcal{U}$ and $\mathcal{G}$ be the sets of unlabeled data and generated data. 

\begin{assum}
\textbf{Convergence conditions.}
When $D$ converges on a finite training set $\{\mathcal{L}, \mathcal{U}, \mathcal{G}\}$, $D$ learns a (strongly) correct decision boundary for all training data points. More specifically, (1) for any $(x, y) \in \mathcal{L}$, we have $w_y^\top f(x) > w_k^\top f(x)$ for any other class $k \not= y$; (2) for any $x \in \mathcal{G}$, we have $0 > \max_{k = 1}^K w_k^\top f(x)$; (3) for any $x \in \mathcal{U}$, we have $\max_{k = 1}^K w_k^\top f(x) > 0$.
\label{assum:opt}
\end{assum}

In Assumption \ref{assum:opt}, conditions (1) and (2) assume classification correctness on labeled data and true-fake correctness on generated data respectively, which is directly induced by the objective function. Likewise, it is also reasonable to assume true-fake correctness on unlabeled data, i.e., $\log \sum_k \exp w_k^\top f(x) > 0$ for $x \in \mathcal{U}$. However, condition (3) goes beyond this and assumes $\max_k w_k^\top f(x) > 0$. We discuss this issue in detail in the supplementary material and argue that these assumptions are reasonable. Moreover, in Section \ref{sec:approach}, our approach addresses this issue explicitly by adding a conditional entropy term to the discriminator objective to enforce condition (3).



\begin{lemma}
Suppose for all $k$, the L2-norms of weights $w_k$ are bounded by $\|w_k\|_2 \leq C$. Suppose that there exists $\epsilon > 0$ such that for any $f_G \in F_G$, there exists $f'_G \in \mathcal{G}$ such that $\|f_G - f'_G\|_2 \leq \epsilon$. With the conditions in Assumption \ref{assum:opt}, for all $k \leq K$, we have $w_k^\top f_G < C \epsilon$.
\label{lem:bound}
\end{lemma}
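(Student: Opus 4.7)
The plan is to reduce the lemma to a simple Cauchy–Schwarz argument by using the $\epsilon$-covering hypothesis to transport the bound available on the sampled generated points (from Assumption~\ref{assum:opt}~(2)) to an arbitrary point $f_G \in F_G$.

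First I would fix an arbitrary $f_G \in F_G$ and, by hypothesis, pick a nearby generated feature $f'_G \in \mathcal{G}$ with $\|f_G - f'_G\|_2 \leq \epsilon$. Since $f'_G$ is a generated sample, condition~(2) of Assumption~\ref{assum:opt} (recall that by the paragraph preceding the assumption we are treating $G$ as producing features directly, so the inner products are evaluated at the feature vector itself) gives $w_k^\top f'_G < 0$ for every $k \leq K$. Next I would split
\begin{equation*}
w_k^\top f_G \;=\; w_k^\top f'_G \;+\; w_k^\top (f_G - f'_G)
\end{equation*}
and estimate the residual with Cauchy–Schwarz: $|w_k^\top (f_G - f'_G)| \leq \|w_k\|_2 \, \|f_G - f'_G\|_2 \leq C\epsilon$. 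Combining the strict inequality $w_k^\top f'_G < 0$ with this bound yields $w_k^\top f_G < C\epsilon$, which is exactly the claim. Since $f_G \in F_G$ and $k \leq K$ were arbitrary, the lemma follows.

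There is essentially no hard step; the only thing one must be careful about is the notational convention that $G$ outputs in feature space, so that ``$f'_G \in \mathcal{G}$'' can be fed directly into $w_k^\top \cdot$ without an extra application of $f$. Once that is in place, the argument is a one–line continuity estimate, and the rest of the lemma's content is already packaged into Assumption~\ref{assum:opt}~(2) and the norm bound $\|w_k\|_2 \leq C$. The mild subtlety worth flagging is that the inequality in (2) is strict but the Cauchy–Schwarz bound is non-strict; the combined inequality $w_k^\top f_G < C\epsilon$ is still strict because the first term is strictly negative, so the conclusion is stated with a strict sign as in the lemma.
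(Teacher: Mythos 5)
Your proposal is correct and is essentially identical to the paper's own argument: both decompose $w_k^\top f_G = w_k^\top f'_G + w_k^\top (f_G - f'_G)$, invoke condition (2) of Assumption \ref{assum:opt} for $w_k^\top f'_G < 0$, and bound the residual by $C\epsilon$ via Cauchy--Schwarz. Your remarks on the feature-space convention and the strictness of the inequality are accurate but add nothing beyond the paper's proof.
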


\begin{cor}
When unlimited generated data samples are available, with the conditions in Lemma \ref{lem:bound}, we have $\lim_{|\mathcal{G}| \rightarrow \infty} w_k^\top f_G \leq 0$.
\label{cor}
\end{cor}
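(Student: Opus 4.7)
The plan is to use Lemma \ref{lem:bound} as a black box and drive the approximation parameter $\epsilon$ to zero by exploiting the asymptotic density of generated samples in $F_G$.

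First, I would observe that $F_G \subseteq \mathcal{B}$ is bounded (since $\mathcal{B}$ is a bounded convex set), so for every $\eta > 0$ one can cover $F_G$ with finitely many balls of radius $\eta$. Since $\mathcal{G}$ consists of i.i.d. samples drawn from $p_G$, whose support is exactly $F_G$, each of these (finitely many) balls has positive probability mass under $p_G$. A standard Borel--Cantelli style argument then shows that, with probability one, once $|\mathcal{G}|$ is sufficiently large, every one of these balls contains at least one generated sample. Consequently, for every $f_G \in F_G$ there exists $f'_G \in \mathcal{G}$ with $\|f_G - f'_G\|_2 \leq \eta$; that is, the hypothesis of Lemma \ref{lem:bound} is satisfied with $\epsilon = \eta$.

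Next I would invoke Lemma \ref{lem:bound} itself: under the convergence conditions in Assumption \ref{assum:opt} and the norm bound $\|w_k\|_2 \leq C$, we obtain $w_k^\top f_G < C \eta$ for every $k \leq K$ and every $f_G \in F_G$, once $|\mathcal{G}|$ is large enough for the covering radius $\eta$ to be achieved. Since $\eta > 0$ was arbitrary, letting $\eta \downarrow 0$ along a sequence (and correspondingly letting $|\mathcal{G}| \to \infty$) gives $\limsup_{|\mathcal{G}| \to \infty} w_k^\top f_G \leq 0$, which is the claimed inequality.

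The only delicate step is the density claim, i.e., that infinitely many i.i.d.\ draws from $p_G$ become $\eta$-dense in $F_G$; this is where one must invoke boundedness of $F_G$ (to get a finite cover) and the assumption that the support of $p_G$ is $F_G$ (so every cover element has positive probability). Everything else is a direct substitution into Lemma \ref{lem:bound} and a limit in $\eta$. I do not expect any new estimate to be needed beyond the bound already proved in Lemma \ref{lem:bound}.
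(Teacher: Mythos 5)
Your proof is correct and takes essentially the same route the paper intends: the corollary is just Lemma \ref{lem:bound} with the approximation radius $\epsilon$ driven to zero as $|\mathcal{G}| \to \infty$, and your Borel--Cantelli covering argument simply makes explicit the density claim that the paper leaves implicit. (A trivial nit: a cover of $F_G$ by radius-$\eta$ balls each containing a sample yields a nearest generated point within $2\eta$ rather than $\eta$, which changes nothing in the limit.)
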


See the supplementary material for the proof.

\begin{prop}
Given the conditions in Corollary \ref{cor}, for all class $k \leq K$, for all feature space points $f_k \in F_k$, we have $w_k^\top f_k > w_j^\top f_k$ for any $j \not= k$. 
\label{prop:correct}
\end{prop}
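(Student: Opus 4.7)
The plan is to decompose the claim into two one-sided statements: (i) $w_j^\top f_k < 0$ for every $j \neq k$, and (ii) $w_k^\top f_k > 0$, after which the desired inequality $w_k^\top f_k > w_j^\top f_k$ follows by chaining $w_k^\top f_k > 0 > w_j^\top f_k$. The main ingredient is Corollary \ref{cor}, which places all of $F_G$ on the non-positive side of every hyperplane $\{f : w_l^\top f = 0\}$; the key idea is to propagate this bound from $F_G$ back into the class support $F_k$ by forming an explicit convex combination with a labeled point of the competing class.

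For (i), I would fix $k$, pick $f_k \in F_k$ and $j \neq k$, and take a labeled $f_j^* \in F_j$ (which exists by hypothesis). Applying the margin assumption to the pair $(f_k, f_j^*)$ produces $\alpha \in (0,1)$ with $f^{\mathrm{mid}} := \alpha f_k + (1-\alpha) f_j^* \notin F_k \cup F_j$; convexity of $\mathcal{B}$ keeps $f^{\mathrm{mid}} \in \mathcal{B}$, and by tuning the thresholds $\{\epsilon_l\}$ as permitted in Section \ref{sec:comp} one ensures $f^{\mathrm{mid}} \in F_G$. Corollary \ref{cor} then gives $w_j^\top f^{\mathrm{mid}} \leq 0$, i.e., $\alpha w_j^\top f_k + (1-\alpha) w_j^\top f_j^* \leq 0$. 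Assumption \ref{assum:opt}(1) at $f_j^*$ together with Assumption \ref{assum:opt}(3) (applied to the labeled point $f_j^*$ viewed as an observation) yields $w_j^\top f_j^* = \max_l w_l^\top f_j^* > 0$ strictly, so the displayed inequality forces $w_j^\top f_k < 0$, establishing (i). For (ii), I would extend Assumption \ref{assum:opt}(3) from $\mathcal{U}$ to the generic point $f_k$, justifiable because $f_k$ lies in a positive-density region under $p_k$ so unlabeled samples arbitrarily close to it are available in the infinite-data regime, obtaining $\max_{l \leq K} w_l^\top f_k > 0$; combined with (i), the maximizer must be $l = k$, yielding $w_k^\top f_k > 0$, and chaining closes the proof.

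The hardest parts of the plan are the two extension steps. First, the margin assumption only states $f^{\mathrm{mid}} \notin F_k \cup F_j$, so I must rule out $f^{\mathrm{mid}}$ intruding into some other $F_l$ for $l \neq k, j$; here I would invoke the remark in Section \ref{sec:comp} that the thresholds $\epsilon_l$ can be shrunk to separate classes with a buffer that belongs to $F_G$, so the combination indeed lands in the complement region. Second, extending the unlabeled-data condition from the finite set $\mathcal{U}$ to an arbitrary feature-space point $f_k \in F_k$ requires a continuity/density argument under the same infinite-capacity and infinite-data regime already invoked for Corollary \ref{cor}. Once both extensions are justified, the remaining arithmetic, which is essentially a single convex-combination rearrangement, is routine.
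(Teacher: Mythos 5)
Your proposal is correct and runs on the same engine as the paper's proof: a convex combination of $f_k$ with a labeled point of class $j$ that lands in $F_G$, Corollary \ref{cor} to force the $j$-th logit of that combination to be non-positive, and the positivity guarantees of Assumption \ref{assum:opt} to finish. The only structural difference is packaging: the paper fixes $j=\arg\max_{j\neq k} w_j^\top f_k$ and argues by contradiction (using condition (3) at $f_k$ to get $w_j^\top f_k>0$ and contradicting $w_j^\top f_G\leq 0$), whereas you argue directly, first extracting $w_j^\top f_k<0$ for every $j\neq k$ from the same convex-combination inequality plus $w_j^\top f_j^*>0$, and then using condition (3) at $f_k$ to conclude $w_k^\top f_k>0$; this gives the slightly stronger two-sided conclusion $w_k^\top f_k>0>w_j^\top f_k$ at no extra cost. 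Both routes lean on the same two tacit extensions — that the combination avoids every $F_l$ (the margin assumption as stated only excludes $F_j\cup F_k$) and that condition (3) of Assumption \ref{assum:opt}, stated for the finite set $\mathcal{U}$, applies to an arbitrary $f_k\in F_k$ — which the paper's own proof simply asserts and you explicitly flag and sketch justifications for, so your treatment is, if anything, more careful on exactly the points where the original is loose.
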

\begin{proof}
Without loss of generality, suppose $j = \arg \max_{j \not= k} w_j^\top f_k$. Now we prove it by contradiction. Suppose $w_k^\top f_k \leq w_j^\top f_k$.
Since $F_k$'s are disjoint with a margin, $\mathcal{B}$ is a convex set and $F_G = \mathcal{B} - \cup_k F_k$, there exists $0 < \alpha < 1$ such that $f_G = \alpha f_k + (1 - \alpha) f_j$ with $f_G \in F_G$ and $f_j$ being the feature of a labeled data point in $F_j$. By Corollary \ref{cor}, it follows that $w_j^\top f_G \leq 0$. Thus, $w_j^\top f_G = \alpha w_j^\top f_k + (1 - \alpha) w_j^\top f_j \leq 0$.
By Assumption \ref{assum:opt}, $w_j^\top f_k > 0$ and $w_j^\top f_j > 0$, leading to contradiction.
It follows that $w_k^\top f_k > w_j^\top f_k$ for any $j \not= k$.
\end{proof}

Proposition \ref{prop:correct} guarantees that when $G$ is a complement generator, under mild assumptions, a near-optimal $D$ learns correct decision boundaries in each high-density subset $F_k$ (defined by $\epsilon_k$) of the data support in the feature space. Intuitively, the generator generates complement samples so the logits of the true classes are forced to be low in the complement. As a result, the discriminator obtains class boundaries in low-density areas. This builds a connection between our approach with manifold-based methods \citep{belkin2006manifold,zhu2003semi} which also leverage the low-density boundary assumption.

With our theoretical analysis, we can now answer the questions raised in Section \ref{sec:intro}. First, the $(K+1)$-class formulation is effective because the generated complement samples encourage the discriminator to place the class boundaries in low-density areas (Proposition \ref{prop:correct}). Second, good semi-supervised learning indeed requires a bad generator because a perfect generator is not able to improve the generalization performance (Proposition \ref{prop:perfect}).

\section{Case Study on Synthetic Data}
\label{sec:case_study}
In the previous section, we have established the fact a complement generator, instead of a perfect generator, is what makes a good semi-supervised learning algorithm.
Now, to get a more intuitive understanding, we conduct a case study based on two 2D synthetic datasets, where we can easily verify our theoretical analysis by visualizing the model behaviors. 
In addition, by analyzing how feature matching (FM)~\citep{salimans2016improved} works in 2D space, we identify some potential problems of it, which motivates our approach to be introduced in the next section.
Specifically, two synthetic datasets are four spins and two circles, as shown in Fig. \ref{fig:synthetic_data}.
\begin{figure}[!tb]
	\centering
	\minipage{0.5\textwidth}\centering
		\includegraphics[width=0.49\linewidth]{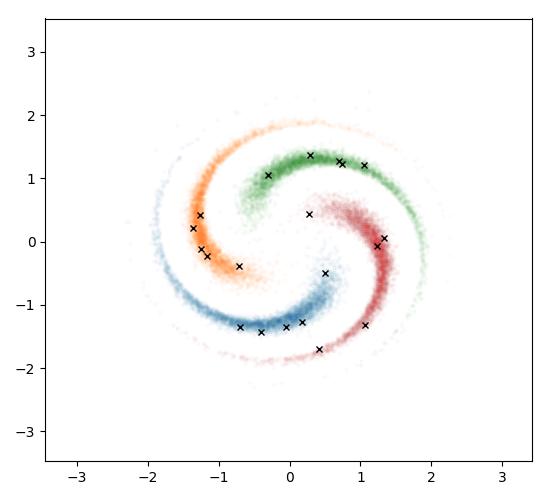}\hfill
		\includegraphics[width=0.49\linewidth]{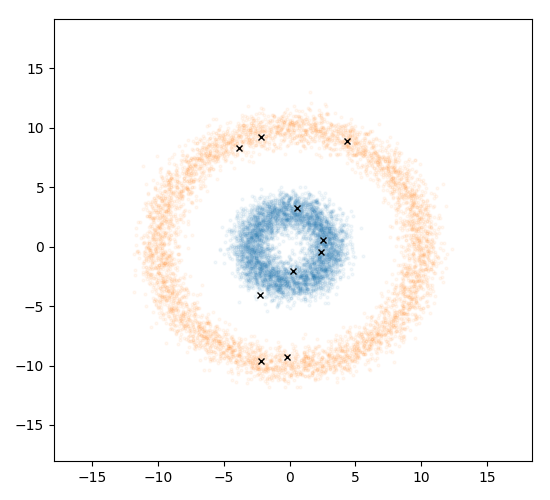}
		\caption{\label{fig:synthetic_data} \small Labeled and unlabeled data are denoted by cross and point respectively, and different colors indicate classes.}
	\endminipage\hfill
	\minipage{0.48\textwidth}\centering
		\includegraphics[width=\linewidth]{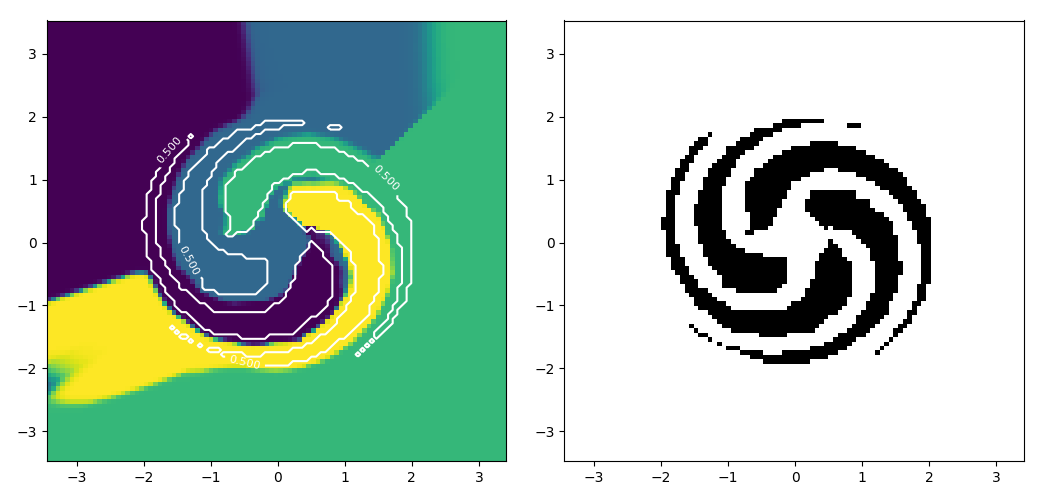}
		\caption{\label{fig:four_spins_complement} \small Left: Classification decision boundary, where the white line indicates true-fake boundary; Right: True-Fake decision boundary}
	\endminipage\hfill
	\minipage{0.235\textwidth}\centering
	\includegraphics[width=\linewidth]{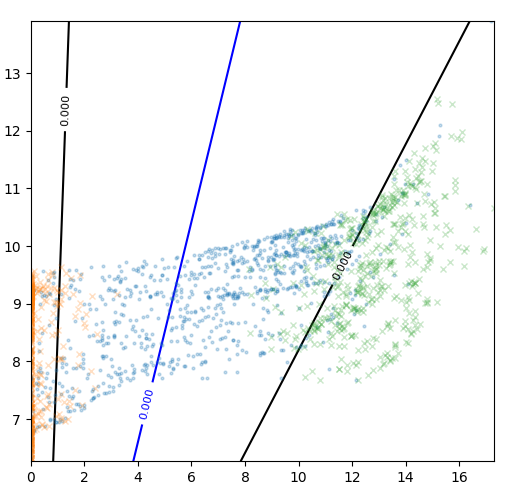}
	\caption{\small Feature space at convergence}
	\label{fig:feature_space}
	\endminipage\hfill
	\minipage{0.72\textwidth}\centering
	\includegraphics[width=\linewidth]{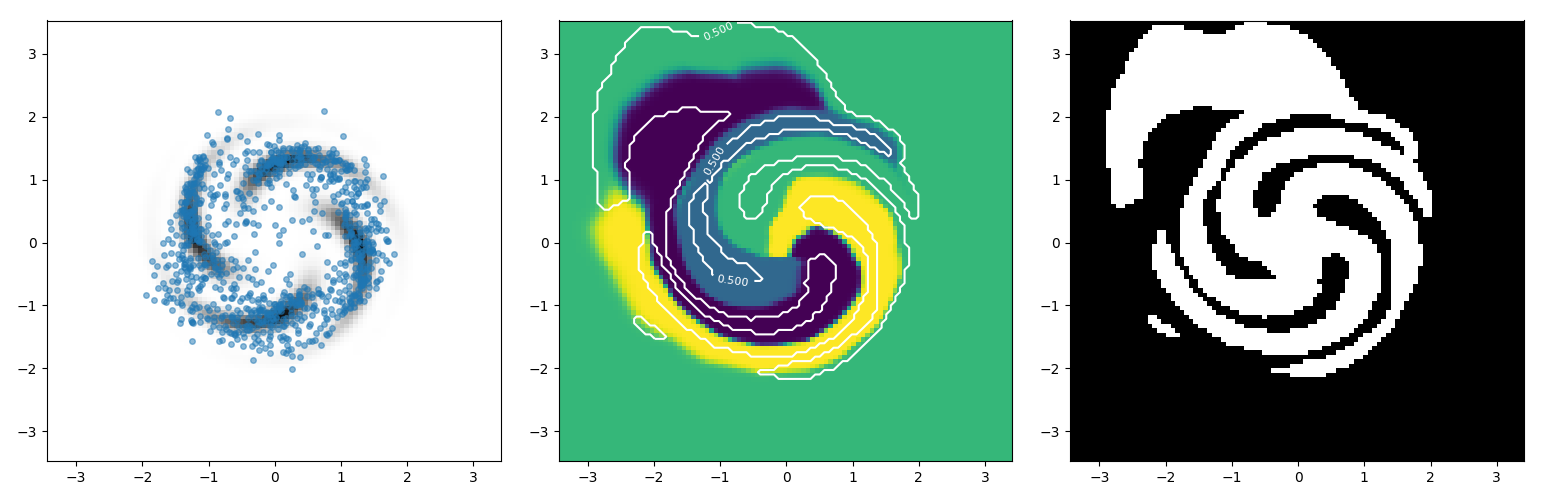}
	\caption{\small Left: Blue points are generated data, and the black shadow indicates unlabeled data. Middle and right can be interpreted as above.}
	\label{fig:four_spins_fm}
	\endminipage
\vspace{-1em}
\end{figure}
\paragraph{Soundness of complement generator} Firstly, to verify that the complement generator is a preferred choice, we construct the complement generator by uniformly sampling from the a bounded 2D box that contains all unlabeled data, and removing those on the manifold.  
Based on the complement generator, the result on four spins is visualized in Fig. \ref{fig:four_spins_complement}.
As expected, both the classification and true-fake decision boundaries are almost perfect. 
More importantly, the classification decision boundary always lies in the fake data area (left panel), which well matches our theoretical analysis.
\paragraph{Visualization of feature space} Next, to verify our analysis about the feature space, we choose the feature dimension to be 2, apply the FM to the simpler dataset of two circles, and visualize the feature space in Fig. \ref{fig:feature_space}.
As we can see, most of the generated features (blue points) resides in between the features of two classes (green and orange crosses), although there exists some overlap.
As a result, the discriminator can almost perfectly distinguish between true and generated samples as indicated by the black decision boundary, satisfying the our required Assumption \ref{assum:opt}.
Meanwhile, the model obtains a perfect classification boundary (blue line) as our analysis suggests.
\paragraph{Pros and cons of feature matching} Finally, to further understand the strength and weakness of FM, we analyze the solution FM reaches on four spins shown in Fig. \ref{fig:four_spins_fm}.
From the left panel, we can see many of the generated samples actually fall into the data manifold, while the rest scatters around in the nearby surroundings of data manifold.
It suggests that by matching the first-order moment by SGD, FM is performing some kind of distribution matching, though in a rather \textit{weak} manner.
Loosely speaking, FM has the effect of generating samples close to the manifold.
But due to its weak power in distribution matching, FM will inevitably generate samples outside of the manifold, especially when the data complexity increases.
Consequently, the generator density $p_G$ is usually lower than the true data density $p$ within the manifold and higher outside.
Hence, an optimal discriminator $P_{D^*}(K+1 \mid x) = p(x) / (p(x) + p_G(x))$ could still distinguish between true and generated samples in many cases.
However, there are two types of mistakes the discriminator can still make
\begin{enumerate}[leftmargin=*]
	\item Higher density mistake inside manifold: 
	Since the FM generator still assigns a significant amount of probability mass inside the support, wherever $p_G > p > 0$, an optimal discriminator will incorrectly predict samples in that region as ``fake''.
	Actually, this problem has already shown up when we examine the feature space (Fig. \ref{fig:feature_space}).
	
	\item Collapsing with missing coverage outside manifold: 
	As the feature matching objective for the generator only requires matching the first-order statistics, there exists many trivial solutions the generator can end up with. 
	For example, it can simply collapse to mean of unlabeled features, or a few surrounding modes as along as the feature mean matches.
	Actually, we do see such collapsing phenomenon in high-dimensional experiments when FM is used (see Fig. \ref{fig:fm-svhn} and Fig. \ref{fig:fm-cifar})
	As a result, a collapsed generator will fail to cover some gap areas between manifolds. 
	Since the discriminator is only well-defined on the union of the data supports of $p$ and $p_G$, the prediction result in such missing area is under-determined and fully relies on the smoothness of the parametric model. 
	In this case, significant mistakes can also occur.
\end{enumerate}

\section{Approach} \label{sec:approach}

As discussed in previous sections, feature matching GANs suffer from the following drawbacks:
1) the first-order moment matching objective does not prevent the generator from collapsing (missing coverage);
2) feature matching can generate high-density samples inside manifold;
3) the discriminator objective does not encourage realization of condition (3) in Assumption \ref{assum:opt} as discussed in Section \ref{sec:comp}.
Our approach aims to explicitly address the above drawbacks.

Following prior work \citep{salimans2016improved,goodfellow2014generative}, we employ a GAN-like implicit generator. We first sample a latent variable $z$ from a uniform distribution $\mathcal{U}(0, 1)$ for each dimension, and then apply a deep convolutional network to transform $z$ to a sample $x$.

\subsection{Generator Entropy} \label{sec:gen-ent}
Fundamentally, the first drawback concerns the entropy of the distribution of generated features, $\mathcal{H}(p_G(f))$. This connection is rather intuitive, as the collapsing issue is a clear sign of low entropy. Therefore, to avoid collapsing and increase coverage, we consider explicitly increasing the entropy. 

Although the idea sounds simple and straightforward, there are two practical challenges. 
Firstly, as implicit generative models, GANs only provide samples rather than an analytic density form. As a result, we cannot evaluate the entropy exactly, which rules out the possibility of naive optimization.
More problematically, the entropy is defined in a high-dimensional feature space, which is changing dynamically throughout the training process. 
Consequently, it is difficult to estimate and optimize the generator entropy in the feature space in a stable and reliable way.
Faced with these difficulties, we consider two practical solutions.

The first method is inspired by the fact that input space is essentially static, where estimating and optimizing the counterpart quantities would be much more feasible.
Hence, we instead increase the generator entropy in the \textit{input space}, i.e., $\mathcal{H}(p_G(x))$, using a technique derived from an information theoretical perspective and relies on variational inference (VI). 
Specially, let $\mathcal{Z}$ be the latent variable space, and $\mathcal{X}$ be the input space. 
We introduce an additional encoder, $q: \mathcal{X} \mapsto \mathcal{Z}$, to define a variational upper bound of the negative entropy~\citep{dai2017calibrating},
$-\mathcal{H}(p_G(x)) \leq - \mathbb{E}_{x, z \sim p_G} \log q(z | x) = L_\text{VI}$.
Hence, minimizing the upper bound $L_\text{VI}$ effectively increases the generator entropy.
In our implementation, we formulate $q$ as a diagonal Gaussian with bounded variance, i.e.
$q(z | x) = \mathcal{N}(\mu(x), \sigma^2(x)), \text{~with~} 0 < \sigma(x) < \theta$,
where $\mu(\cdot)$ and $\sigma(\cdot)$ are neural networks, and $\theta$ is the threshold to prevent arbitrarily large variance.

Alternatively, the second method aims at increasing the generator entropy in the feature space by optimizing an auxiliary objective. Concretely, we adapt the pull-away term (PT) \citep{zhao2016energy} as the auxiliary cost,
$L_\text{PT} = \frac{1}{N(N-1)} \sum_{i=1}^{N} \sum_{j \neq i} \Big(\frac{f(x_i)^\top f(x_j)}{ \| f(x_i) \| \| f(x_j) \| }\Big)^2$,
where $N$ is the size of a mini-batch and $x$ are samples.
Intuitively, the pull-away term tries to orthogonalize the features in each mini-batch by minimizing the squared cosine similarity. Hence, it has the effect of increasing the diversity of generated features and thus the generator entropy.

\subsection{Generating Low-Density Samples}

The second drawback of feature matching GANs is that high-density samples can be generated in the feature space, which is not desirable according to our analysis. Similar to the argument in Section \ref{sec:gen-ent}, it is infeasible to directly minimize the density of generated features. Instead, we enforce the generation of samples with low density in the input space. Specifically, given a threshold $\epsilon$, we minimize the following term as part of our objective:
\begin{equation} \label{eq:ce}
\mathbb{E}_{x \sim p_G} \log p(x) \mathbb{I}[p(x) > \epsilon]
\end{equation}
where $\mathbb{I}[\cdot]$ is an indicator function. Using a threshold $\epsilon$, we ensure that only high-density samples are penalized while low-density samples are unaffected. Intuitively, this objective pushes the generated samples to ``move'' towards low-density regions defined by $p(x)$.
To model the probability distribution over images, we simply adapt the state-of-the-art density estimation model for natural images, namely the PixelCNN++~\citep{salimans2017pixelcnn} model. The PixelCNN++ model is used to estimate the density $p(x)$ in Eq. (\ref{eq:ce}). The model is pretrained on the training set, and fixed during semi-supervised training.

\subsection{Generator Objective and Interpretation}
Combining our solutions to the first two drawbacks of feature matching GANs, we have the following objective function of the generator:
\begin{equation}
\label{eq:g_full}
\min_G \quad -\mathcal{H}(p_G) + \mathbb{E}_{x \sim p_G} \log p(x) \mathbb{I}[p(x) > \epsilon]  + \|\mathbb{E}_{x \sim p_G} f(x) - \mathbb{E}_{x \sim \mathcal{U}} f(x)\|^2.
\end{equation}
This objective is closely related to the idea of complement generator discussed in Section \ref{sec:theory}.
To see that, let's first define a target complement distribution in the input space as follows
\begin{equation*}
p^*(x) = 
\begin{cases}
\frac{1}{Z} \frac{1}{p(x)} & \text{if } p(x) > \epsilon \text{ and } x \in \mathcal{B}_x \\
C & \text{if } p(x) \leq \epsilon \text{ and } x \in \mathcal{B}_x,
\end{cases}
\end{equation*}
where $Z$ is a normalizer, $C$ is a constant, and $\mathcal{B}_x$ is the set defined by mapping $\mathcal{B}$ from the feature space to the input space. 
With the definition, the KL divergence (KLD) between $p_G(x)$ and $p^*(x)$ is
\begin{equation*}
\text{KL}(p_G \| p^*) = -\mathcal{H}(p_G) + \mathbb{E}_{x \sim p_G} \log p(x) \mathbb{I}[p(x) > \epsilon] + \mathbb{E}_{x \sim p_G} \big( \mathbb{I}[p(x) > \epsilon] \log Z - \mathbb{I}[p(x) \leq \epsilon] \log C \big).
\end{equation*}
The form of the KLD immediately reveals the aforementioned connection. 
Firstly, the KLD shares two exactly the same terms with the generator objective \eqref{eq:g_full}. 
Secondly, while $p^*(x)$ is only defined in $\mathcal{B}_x$, there is not such a hard constraint on $p_G(x)$. 
However, the feature matching term in Eq. \eqref{eq:g_full} can be seen as softly enforcing this constraint by bringing generated samples ``close'' to the true data (Cf. Section \ref{sec:case_study}). 
Moreover, because the identity function $\mathbb{I}[\cdot]$ has zero gradient almost everywhere, the last term in KLD would not contribute any informative gradient to the generator. 
In summary, optimizing our proposed objective \eqref{eq:g_full} can be understood as minimizing the KL divergence between the generator distribution and a desired complement distribution, which connects our practical solution to our theoretical analysis.

\subsection{Conditional Entropy}

In order for the complement generator to work, according to condition (3) in Assumption \ref{assum:opt}, the discriminator needs to have strong true-fake belief on unlabeled data, i.e., $\max_{k=1}^K w_k^\top f(x) > 0$. 
However, the objective function of the discriminator in \citep{salimans2016improved} does not enforce a dominant class. Instead, it only needs $\sum_{k=1}^K P_D(k|x) > P_D(K+1 | x)$ to obtain a correct decision boundary, while the probabilities $P_D(k | x)$ for $k \leq K$ can possibly be uniformly distributed. To guarantee the strong true-fake belief in the optimal conditions, we add a conditional entropy term to the discriminator objective and it becomes,
\begin{equation}
\small
\begin{aligned}
\max_{D} \quad
& \mathbb{E}_{x, y \sim \mathcal{L}} \log p_D(y | x, y \leq K) + \mathbb{E}_{x \sim \mathcal{U}} \log p_D(y \leq K | x) + \\
& \mathbb{E}_{x \sim p_G} \log p_D(K + 1 | x)  + \mathbb{E}_{x \sim \mathcal{U}} \sum_{k = 1}^K p_D(k | x) \log p_D(k | x).
\label{eq:d_full}
\end{aligned}
\end{equation}
By optimizing Eq. (\ref{eq:d_full}), the discriminator is encouraged to satisfy condition (3) in Assumption \ref{assum:opt}.
Note that the same conditional entropy term has been used in other semi-supervised learning methods \citep{springenberg2015unsupervised,miyato2017virtual} as well, but here we motivate the minimization of conditional entropy based on our theoretical analysis of GAN-based semi-supervised learning.

To train the networks, we alternatively update the generator and the discriminator to optimize Eq. (\ref{eq:g_full}) and Eq. (\ref{eq:d_full}) based on mini-batches. If an encoder is used to maximize $\mathcal{H}(p_G)$, the encoder and the generator are updated at the same time.

\section{Experiments}
\label{sec:exp}

We mainly consider three widely used benchmark datasets, namely MNIST, SVHN, and CIFAR-10.
As in previous work, we randomly sample 100, 1,000, and 4,000 labeled samples for MNIST, SVHN, and CIFAR-10 respectively during training, and use the standard data split for testing.
We use the 10-quantile log probability to define the threshold $\epsilon$ in Eq. \eqref{eq:g_full}. We add instance noise to the input of the discriminator \citep{arjovsky2017towards,sonderby2016amortised}, and use spatial dropout \citep{tompson2015efficient} to obtain faster convergence.
Except for these two modifications, we use the same neural network architecture as in \citep{salimans2016improved}.
For fair comparison, we also report the performance of our FM implementation with the aforementioned differences.

\subsection{Main Results}
\begin{table}[t!]
\small
\centering
\begin{tabular}{ l c c c }
\toprule
Methods                                                           & MNIST (\# errors)& SVHN (\% errors) & CIFAR-10 (\% errors) \\ \midrule
CatGAN~\citep{springenberg2015unsupervised} & 191 $\pm$ 10      & -                          & 19.58 $\pm$ 0.46 \\
SDGM~\citep{maaloe2016auxiliary}                      & 132 $\pm$ 7       & 16.61 $\pm$ 0.24 & - \\
Ladder network~\citep{rasmus2015semi}            & 106 $\pm$ 37     & -                           & 20.40 $\pm$ 0.47 \\
ADGM~\citep{maaloe2016auxiliary}                      & 96 $\pm$ 2        & 22.86                   & - \\
FM~\citep{salimans2016improved}$\ ^*$              & 93 $\pm$ 6.5     & 8.11 $\pm$ 1.3      & 18.63 $\pm$ 2.32 \\
ALI~\citep{donahue2016adversarial}                    & -                        & 7.42 $\pm$ 0.65   & 17.99 $\pm$ 1.62 \\
VAT small~\citep{miyato2017virtual}$\ ^*$                           & 136                    & 6.83                     & 14.87 \\
Our best model$\ ^*$                                                 & \textbf{79.5 $\pm$ 9.8}   & \textbf{4.25 $\pm$ 0.03}     & \textbf{14.41 $\pm$ 0.30} \\ \midrule
Triple GAN~\citep{li2017triple}$\ ^{*\ddagger}$                    & 91$\pm$ 58       & 5.77 $\pm$ 0.17    & 16.99 $\pm$ 0.36 \\
$\Pi$ model~\citep{laine2016temporal}$\ ^{\dagger\ddagger}$ & - & 5.43 $\pm$ 0.25 & 16.55 $\pm$ 0.29 \\
VAT+EntMin+Large~\citep{miyato2017virtual}$ ^\dagger$ & - & 4.28 & 13.15 \\
\bottomrule
\end{tabular}
\caption{\small Comparison with state-of-the-art methods on three benchmark datasets. Only methods without data augmentation are included. $*$ indicates using the same (small) discriminator architecture, $\dagger$ indicates using a larger discriminator architecture, and $\ddagger$ means self-ensembling.}
\label{tab:sota}
\vspace{-1em}
\end{table}
\begin{figure}[tb]
	\centering
	\begin{subfigure}[t]{0.24\textwidth}
	\includegraphics[width=\textwidth]{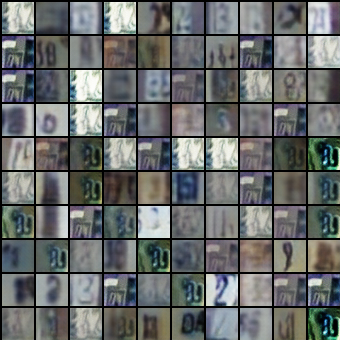}
	\caption{\small FM on SVHN}
	\label{fig:fm-svhn}
	\end{subfigure}
	\begin{subfigure}[t]{0.24\textwidth}
	\includegraphics[width=\textwidth]{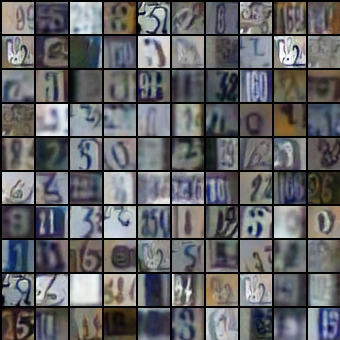}
	\caption{\small Ours on SVHN}
	\end{subfigure}
	\begin{subfigure}[t]{0.24\textwidth}
	\includegraphics[width=\textwidth]{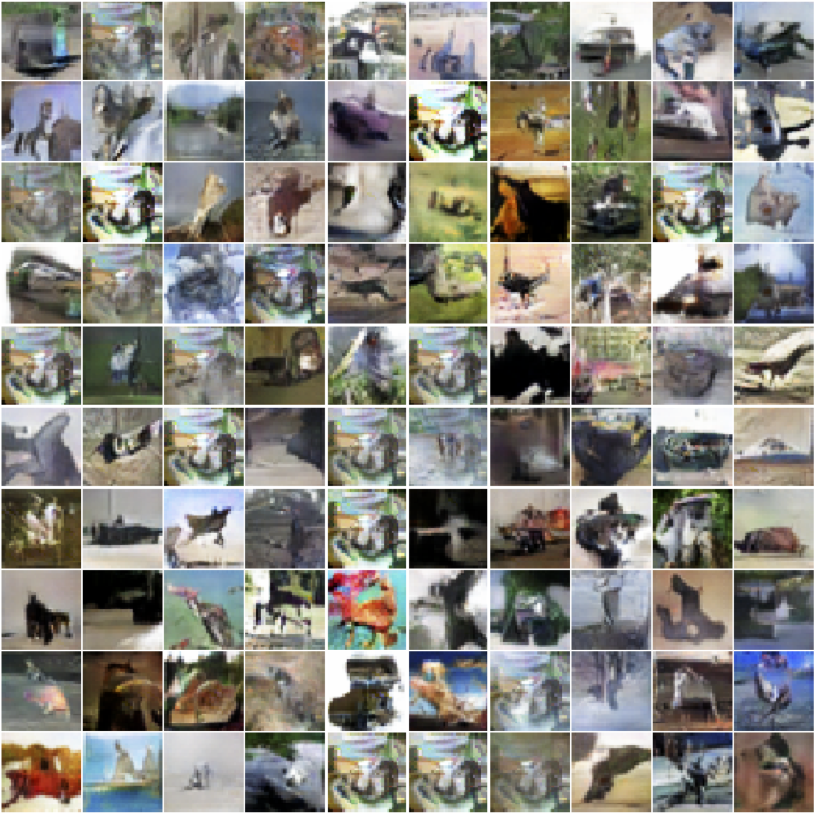}
	\caption{\small FM on CIFAR}
	\label{fig:fm-cifar}
	\end{subfigure}
	\begin{subfigure}[t]{0.24\textwidth}
	\includegraphics[width=\textwidth]{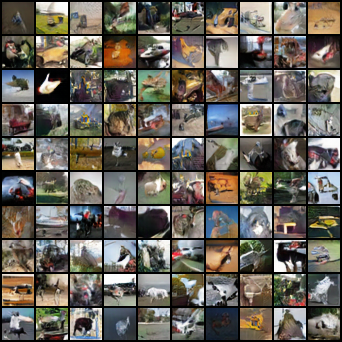}
	\caption{\small Ours on CIFAR}
	\end{subfigure}
	\caption{\small Comparing images generated by FM and our model. FM generates collapsed samples, while our model generates diverse ``bad'' samples.}
	\label{fig:gen}
\vspace{-1em}
\end{figure}


We compare the the results of our best model with state-of-the-art methods on the benchmarks in Table \ref{tab:sota}. Our proposed methods consistently improve the performance upon feature matching. We achieve new state-of-the-art results on all the datasets when only small discriminator architecture is considered. Our results are also state-of-the-art on MNIST and SVHN among all single-model results, even when compared with methods using self-ensembling and large discriminator architectures.
Finally, note that because our method is actually orthogonal to VAT~\citep{miyato2017virtual},
combining VAT with our presented approach should yield further performance improvement in practice.

\subsection{Ablation Study}
\begin{table}[t!]
	\small
	\centering
	\begin{tabular}{ l l | l l }
		\toprule
		Setting & Error & Setting & Error \\ \midrule
		MNIST FM & 85.0 $\pm$ 11.7 & CIFAR FM & 16.14 \\
		MNIST FM+VI & 86.5 $\pm$ 10.6 & CIFAR FM+VI & 14.41 \\
		MNIST FM+LD & 79.5 $\pm$ 9.8 & CIFAR FM+VI+Ent & 15.82 \\
		MNIST FM+LD+Ent & 89.2 $\pm$ 10.5 &  \\
		\midrule \midrule
		Setting & Error & Setting & Max log-p \\ \midrule
		SVHN FM & 6.83 & MNIST FM & -297 \\
		SVHN FM+VI & 5.29 & MNIST FM+LD & -659 \\
		SVHN FM+PT & 4.63 & SVHN FM+PT+Ent & -5809 \\
		SVHN FM+PT+Ent & 4.25 & SVHN FM+PT+LD+Ent & -5919 \\
		SVHN FM+PT+LD+Ent & 4.19 & SVHN 10-quant & -5622 \\
		\midrule\midrule
	\end{tabular}
	\begin{tabular}{l l l l l}
		Setting $\epsilon$ as $q$-th centile & $q=2$ & $q=10$ & $q=20$ & $q=100$ \\ \midrule
		Error on MNIST & 77.7 $\pm$ 6.1 & 79.5 $\pm$ 9.8 & 80.1 $\pm$ 9.6 & 85.0 $\pm$ 11.7 \\
		\bottomrule
	\end{tabular}
	\caption{\small Ablation study. \textit{FM} is feature matching. \textit{LD} is the low-density enforcement term in Eq. (\ref{eq:ce}). \textit{VI} and \textit{PT} are two entropy maximization methods described in Section \ref{sec:gen-ent}. \textit{Ent} means the conditional entropy term in Eq. (\ref{eq:d_full}). \textit{Max log-p} is the maximum log probability of generated samples, evaluated by a PixelCNN++ model. \textit{10-quant} shows the 10-quantile of true image log probability. \textit{Error} means the number of misclassified examples on MNIST, and error rate (\%) on others.}
	\label{tab:ablation}
	\vspace{-2em}
\end{table}

We report the results of ablation study in Table \ref{tab:ablation}. In the following, we analyze the effects of several components in our model, subject to the intrinsic features of different datasets.

First, the generator entropy terms (VI and PT) (Section \ref{sec:gen-ent}) improve the performance on SVHN and CIFAR by up to 2.2 points in terms of error rate. Moreover, as shown in Fig \ref{fig:gen}, our model significantly reduces the collapsing effects present in the samples generated by FM, which also indicates that maximizing the generator entropy is beneficial. On MNIST, probably due to its simplicity, no collapsing phenomenon was observed with vanilla FM training~\cite{salimans2016improved} or in our setting. Under such circumstances, maximizing the generator entropy seems to be unnecessary, and the estimation bias introduced by approximation techniques can even hurt the performance. 

Second, the low-density (LD) term is useful when FM indeed generates samples in high-density areas. 
MNIST is a typical example in this case. When trained with FM, most of the generated hand written digits are highly realistic and have high log probabilities according to the density model (Cf. max log-p in Table \ref{tab:ablation}). 
Hence, when applied to MNIST, LD improves the performance by a clear margin.
By contrast, few of the generated SVHN images are realistic (Cf. Fig. \ref{fig:fm-svhn}). Quantitatively, SVHN samples are assigned very low log probabilities (Cf. Table \ref{tab:ablation}). 
As expected, LD has a negligible effect on the performance for SVHN. 
Moreover, the ``max log-p'' column in Table \ref{tab:ablation} shows that while LD can reduce the maximum log probability of the generated MNIST samples by a large margin, it does not yield noticeable difference on SVHN. This further justifies our analysis.
Based on the above conclusion, we conjecture LD would not help on CIFAR where sample quality is even lower. Thus, we did not train a density model on CIFAR due to the limit of computational resources.


Third, adding the conditional entropy term has mixed effects on different datasets. 
While the conditional entropy (Ent) is an important factor of achieving the best performance on SVHN, it hurts the performance on MNIST and CIFAR. One possible explanation relates to the classic exploitation-exploration tradeoff, where minimizing Ent favors exploitation and minimizing the classification loss favors exploration. During the initial phase of training, the discriminator is relatively {\em uncertain} and thus the gradient of the Ent term might dominate. As a result, the discriminator learns to be more confident even on incorrect predictions, and thus gets trapped in local minima.



Lastly, we vary the values of the hyper-parameter $\epsilon$ in Eq. (\ref{eq:g_full}). As shown at the bottom of Table \ref{tab:ablation}, reducing $\epsilon$ clearly leads to better performance, which further justifies our analysis in Sections \ref{sec:case_study} and \ref{sec:theory} that off-manifold samples are favorable.



\subsection{Generated Samples}

We compare the generated samples of FM and our approach in Fig. \ref{fig:gen}. The FM images in Fig. \ref{fig:fm-cifar} are extracted from previous work \citep{salimans2016improved}. While collapsing is widely observed in FM samples, our model generates diverse ``bad'' images, which is consistent with our analysis.


\section{Conclusions}
In this work, we present a semi-supervised learning framework that uses generated data to boost task performance. 
Under this framework, we characterize the properties of various generators and theoretically prove that a complementary (i.e. bad) generator improves generalization.
Empirically our proposed method improves the performance of image classification on several benchmark datasets.

\section*{Acknowledgement}
This work was supported by the DARPA award D17AP00001, the Google focused award, and the Nvidia NVAIL award.
The authors would also like to thank Han Zhao for his insightful feedback.

\bibliography{refs}
\bibliographystyle{plain}

\newpage

\section{Appendix}

\subsection{Proof of Proposition \ref{prop:perfect}}

\begin{proof}
Given an optimal solution $D = (w, f)$ for the supervised objective, due to the infinite capacity of the discriminator, there exists $D^* = (w^*, f^*)$ such that for all $x$ and $k \leq K$,
\begin{equation}
\exp ( w^{*\top}_k f^*(x) ) = \frac{\exp ( w_k^\top f(x) ) }{\sum_{k'} \exp ( w_{k'}^\top f(x) )}
\label{eq:reparam}
\end{equation}
For all $x$,
\[
P_{D^*}(y | x, y \leq K) = \frac{\exp (w_k^{*\top} f^*(x) ) }{\sum_{k'} \exp ( w_{k'}^{*\top} f^*(x) ) } = \frac{\exp (w_k^\top f(x)) }{\sum_{k'} \exp (w_{k'}^\top f(x)) } = P_D(y | x, y \leq K)
\]
Let $L_D$ be the supervised objective in Eq. (\ref{eq:obj}). Since $p = p_G$, the objective in Eq. (\ref{eq:obj}) can be written as
\[
J_D = L_D + \mathbb{E}_{x \sim p} \left[\log P_D(K + 1 | x) + \log (1 - P_D(K + 1 | x)) \right]
\]
Given Eq. (\ref{eq:reparam}), we have
\[
P_{D^*}(K+1 | x) = \frac{1}{1 + \sum_k \exp w^{*\top}_k f^*(x)} = \frac{1}{2}
\]
Therefore, $D^*$ maximizes the second term of $J_D$. Because $D$ maximizes $L_D$, $D^*$ also maximizes $L_D$. It follows that $D^*$ maximizes $J_D$.
\end{proof}

\subsection{On the Feature Space Bound Assumption}

To obtain our theoretical results, we assume that $\cup_{k=1}^K F_k$ is bounded by a convex set $\mathcal{B}$. And the definition of complement generator requires that $F_G = \mathcal{B} - \cup_{k=1}^K F_k$. Now we justify the necessity of the introduction of $\mathcal{B}$.

The bounded $\mathcal{B}$ is introduced to ensure that Assumption \ref{assum:opt} is realizable. We first show that for Assumption \ref{assum:opt} to hold, $F_G$ must be a convex set.

We define $S = \{f: \max_{k=1}^K w_k^\top f < 0\}$.

\begin{lemma}
$S$ is a convex set.
\label{lem:convex}
\end{lemma}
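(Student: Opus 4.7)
The plan is to exhibit $S$ as an intersection of (open) half-spaces and then invoke the fact that intersections of convex sets are convex. Concretely, I would first rewrite the defining condition: for any $f$, the inequality $\max_{k=1}^K w_k^\top f < 0$ holds if and only if $w_k^\top f < 0$ for every $k \in \{1, \dots, K\}$. Therefore
\[
S = \bigcap_{k=1}^K H_k, \qquad H_k := \{f : w_k^\top f < 0\}.
\]

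Next I would observe that each $H_k$ is an open half-space, hence convex: for $f_1, f_2 \in H_k$ and $\lambda \in [0,1]$, linearity gives $w_k^\top(\lambda f_1 + (1-\lambda) f_2) = \lambda\, w_k^\top f_1 + (1-\lambda)\, w_k^\top f_2 < 0$ since both summands are strictly negative (or zero with the other strictly negative, unless $\lambda \in \{0,1\}$, in which case the result is immediate). A finite intersection of convex sets is convex, so $S$ is convex.

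If a more self-contained argument is preferred rather than citing the intersection fact, I would simply run the same convex-combination check directly on $S$: for $f_1, f_2 \in S$ and $\lambda \in [0,1]$, for each $k$ one has $w_k^\top(\lambda f_1 + (1-\lambda) f_2) = \lambda w_k^\top f_1 + (1-\lambda) w_k^\top f_2 < 0$, and then take the max over $k$. There is no genuine obstacle here — the statement is essentially that a sublevel set of the pointwise maximum of linear functions (a convex function) is convex; the only thing to be careful about is handling the strict inequality, which is preserved under convex combinations because every term $w_k^\top f_i$ is strictly negative.
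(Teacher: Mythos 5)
Your proof is correct and rests on the same key fact as the paper's: linearity of $f \mapsto w_k^\top f$ preserves strict negativity under convex combinations, for every $k$ simultaneously. The paper merely wraps this identical computation in a proof by contradiction, whereas you state it directly (via the intersection-of-half-spaces framing), so the two arguments are essentially the same.
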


\begin{proof}
We prove it by contradiction. Suppose $S$ is a non-convex set, then there exists $f_1, f_2 \in S$, and $0 < \alpha < 1$, such that $f = \alpha f_1 + (1 - \alpha) f_2 \not\in S$. For all $k$, we have $w_k^\top f_1 < 0$ and $w_k^\top f_2 < 0$, and thus it follows
\[
w_k^\top f = \alpha w_k^\top f_1 + (1 - \alpha) w_k^\top f_2 < 0
\]

Therefore, $\max_{k=1}^K w_k^\top f < 0$, and we have $f \in S$, leading to contradiction.

We conclude that $S$ is a convex set.
\end{proof}

If the feature space is unbounded and $F_G$ is defined as $\mathbb{R}^d - \cup_{k=1}^K F_k$, where $d$ is the feature space dimension, then by Assumption \ref{assum:opt}, we have $S = F_G$. Since $F_G$ is the complement set of $\cup_{k=1}^K F_k$ and $F_k$'s are disjoint, $F_G$ is a non-convex set, if $K \geq 2$. However, by Lemma \ref{lem:convex}, $F_G$ is convex, leading to contradiction. We therefore define the complement generator using a bound $\mathcal{B}$.

\subsection{The Reasonableness of Assumption \ref{assum:opt}} \label{sec:reason}

Here, we justify the proposed Assumption \ref{assum:opt}. 

\paragraph{Classification correctness on $\mathcal{L}$} For (1), it assumes the correctness of classification on labeled data $\mathcal{L}$. This only requires the transformation $f(x)$ to have high enough capacity, such that the \textit{limited amount} of labeled data points are linearly separable in the feature space. Under the setting of semi-supervised learning, where $|\mathcal{L}|$ is quite limited, this assumption is usually reasonable. 

\paragraph{True-Fake correctness on $\mathcal{G}$} For (2), it assumes that on generated data, the classifier can correctly distinguish between true and generated data. This can be seen by noticing that $w_{K + 1}^\top f = 0$, and the assumption thus reduces to $w_{K + 1}^\top f(x) > \max_{k = 1}^K w_k^\top f(x)$. For this part to hold, again we essentially require a transformation $f(x)$ with high enough capacity to distinguish true and fake data, which is a standard assumption made in GAN literature.

\paragraph{Strong true-fake belief on $\mathcal{U}$} Finally, part (3) of the assumption is a little bit trickier than the other two.
\begin{itemize}
\item Firstly, note that (3) is related to the true-fake correctness, because $\max_{k = 1}^K w_k^\top f(x) > 0 = w_{K+1}^\top f(x)$ is a \textit{sufficient} (but not necessary) condition for $x$ being classified as a true data point. 
Instead, the actual necessary condition is that $\log \sum_{k=1}^{K} \exp(w_k^\top f(x)) \geq w_{K+1}^\top f(x) = 0$. 
Thus, it means the condition (3) might be violated. 

\item However, using the relationship $\log \sum_{k=1}^{K} \exp(w_k^\top f(x)) \leq \log K \max_{k = 1}^K \exp(w_k^\top f(x))$, 
to guarantee the necessary condition $\log \sum_{k=1}^{K} \exp(w_k^\top f(x)) \geq 0$, we must have
\begin{eqnarray*}
&& \log K \max_{k = 1}^K \exp(w_k^\top f(x)) \geq 0 \\ 
&\implies& \max_{k = 1}^K w_k^\top f(x) \geq \log 1/K
\end{eqnarray*}
Hence, if the condition (3) is violated, it means
\[\log 1/K \leq \max_{k = 1}^K w_k^\top f(x) \leq 0 \]
Note that this is a very small interval for the logit $w_k^\top f(x)$, whose possible range expands the entire real line $(-\infty, \infty)$.
Thus, the region where such violation happens should be limited in size, making the assumption reasonable in practice.

\item Moreover, even there exists a limited violation region, as long as part (1) and part (2) in Assumption \ref{assum:opt} hold, Proposition \ref{prop:correct} always hold for regions inside $\mathcal{U}$ where $\max_{k = 1}^K w_k^\top f(x) > 0$. This can be viewed as a further Corollary. 
\end{itemize}

\begin{figure}[!htb]
	\centering
	\includegraphics[width=0.6\linewidth]{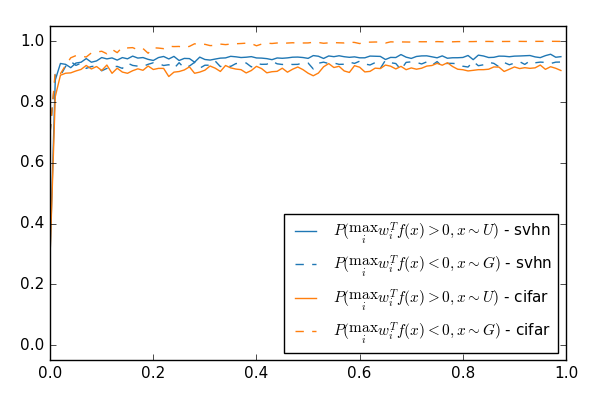}
	\caption{Percentage of the test samples that satisfy the assumption under our best model.}
    \label{fig:verify_assumption}
\end{figure}
Empirically, we find that it is easy for the model to satisfy the correctness assumption on labeled data perfectly.
To verify the other two assumptions, we keep track of the percentage of test samples that the two assumptions hold under our best models.
More specifically, to verify the true-fake correctness on $\mathcal{G}$, we calculate the ratio after each epoch
\[ \frac{\sum_{x \sim \mathcal{T}} \mathbb{I}[\max_{i=1}^{K} w_i^\top f(x) > 0] }{|\mathcal{T}|}, \]
where $\mathcal{T}$ denotes the test set and $|\mathcal{T}|$ is number of sample in it. 
Similarly, for the strong true-fake belief on $\mathcal{U}$, we generate the same number of samples as $|\mathcal{T}|$ and calculate
\[ \frac{\sum_{x \sim p_G} \mathbb{I}[\max_i w_i^T f(x) < 0] }{|\mathcal{T}|} \]
The plot is presented in Fig. \ref{fig:verify_assumption}. 
As we can see, the two ratios are both above $0.9$ for both SVHN and CIFAR-10, which suggests our assumptions are reasonable in practice.

\subsection{Proof of Lemma \ref{lem:bound}}

\begin{proof}
Let $\Delta f = f_G - f'_G$, then we have $\|\Delta f\|_2 \leq \epsilon$. Because $w_k^\top f'_G < 0$ by assumption, it follows
\[
w_k^\top f_G = w_k^\top (f'_G + \Delta f) = w_k^\top f'_G + w_k^\top \Delta f < w_k^\top \Delta f \leq C \epsilon
\]
\end{proof}

\end{document}